\newcommand{\as}{``}
\newcommand{\be}{\begin{em}}
	\newcommand{\ee}{\end{em}}
\newcommand{\bb}{\begin{bf}}
	\newcommand{\eb}{\end{bf}}
\newcommand{\I}[1]{\relax\ifmmode\mbox{\it#1}\else{\it#1}\fi}
\newcommand{\tbs}{\hspace*{4mm}}
\newcommand{\tbm}{\hspace*{8mm}}
\newcommand{\tbl}{\hspace*{12mm}}
\newcommand{\no}{not\,}
\newcommand{\rif}{~\ref}
\newcommand{\K}{\mbox{\textbf{K}}}
\newcommand{\M}{\mbox{\textbf{M}}}
\newcommand{\N}{\mbox{\textbf{\no}}\,}
\newcommand{\NO}{\mbox{\textbf{NOT}}\,}
\newcommand{\Kw}{\mbox{\textbf{K}$^W$}}
\newcommand{\Mw}{\mbox{\textbf{M}$^W$}}
\newcommand{\Nw}{\mbox{\textbf{\no}$^W$}}
\newcommand{\NOw}{\mbox{\textbf{NOT}$^W$}\,}
\newcommand{\A}{{\cal A}}
\newcommand{\Sc}{{\cal S}}
\def\naf{not\,}
\def\ar{\leftarrow}
\newcommand{\tabp}{\mbox{${\cal{T}}(\Pi)$}}
\newtheorem{definition}{Definition}[section]
\newtheorem{corollary}{Corollary}[section]
\newtheorem{theorem}{Theorem}[section]
\title{About epistemic negation and world views\\
	in Epistemic Logic Programs}
\author[S. Costantini]
{Stefania Costantini\\
	DISIM - Universit{\`a} dell'Aquila, Italy\\
	\email{stefania.costantini@univaq.it}}
\begin{document}
\maketitle

\begin{abstract}
In this paper we consider Epistemic Logic Programs, which extend Answer Set Programming (ASP) with \as epistemic operators'', and a recent approach to the semantics of such programs in terms of \emph{World Views}. We propose some observations on the existence and number of world views. We exploit an extended ASP semantics in order to: (i) provide a novel characterization of world views; (ii) query world views and query the whole set of world views. This paper is under consideration for acceptance in TPLP
\end{abstract}

\begin{keywords}
	Answer Set Programming, Epistemic Logic Programs, Epistemic Negation
\end{keywords}	

\section{Introduction}

Answer Set Programming (ASP) is a successful logic programming paradigm under the answer set semantics (AS) \cite{GelLif88,GelLif91}. ASP has been applied, e.g., to information integration, constraint satisfaction, routing, planning, diagnosis, configuration, computer-aided verification, biology/biomedicine, knowledge management, etc. (cf. \cite{ASPJournal2016} and the references therein).
The ASP approach to problem-solving consists basically in the following: (i) encoding of the given problem via an ASP program; (ii) computing the \as answer sets'' of such a program via an inference engine, or \as ASP solver''; (iii) extracting the problem solutions by examining such answer sets; in fact, answer set contents can be in general reformulated in order to present the solution in terms of the given problem. 

Epistemic Logic programs (ELPs), first introduced in \cite{Gelfond91}, extend ASP with \emph{epistemic operators} \K\ and \M: $\K A$ means that (ground) atom $A$ is true in every answer set of given program  $\Pi$, whereas $\M A$ means that $A$ is true in some of the answer sets of $\Pi$. Thus, such operators introduce a form of reasoning over multiple answer sets, where the collection of all the answer sets of $\Pi$ is called a \emph{world view} and, if \M\ and \K\ occur in $\Pi$, more than one world view may exist. The  \emph{epistemic negation operator} \N\  expresses that $A$ \emph{is not provably true}, meaning that $A$ is false in at least one answer set of $\Pi$. By means of \N one can define both \K\ and \M: in fact, $\K A$ and $\M A$ can be rephrased as $\no \N A$ and $\N \no A$, respectively, where $\no$ is ASP standard `default negation'. In fact: concerning \K, if it is not true that $A$ is false in some answer set then $A$ must be true in all of them; concerning \M, if $\no A$ is false in some answer set then $A$ must be true in that answer set. Consequently, many approaches to ELPs, e.g., \cite{ShenE16}, consider explicitly only the operator \N\hspace{-0.1cm}. The semantics of ELPs is provided in terms of \emph{World Views}: instead of a unique set of answer sets like in ASP, there is now a set of such sets. Each one, called \as world view'', consistently satisfies the epistemic expressions that appear in a given program. An epistemic logic program may have several world views. ELP solving systems have been defined and implemented \cite{SonLKL17} on top of state-of-the-art ASP solvers, that are invoked (more than once) to generate and check potential world views. 

In this paper we refer to the work of \cite{ShenE16} concerning the \as general epistemic semantics'' of ELPs. Many other semantic approaches/characterizations have been introduced, among which the following: \cite{Gelfond11,Truszczynski11}, \cite{CerroHS15,Su17}, \cite{Kahl18,Su19,Cabalar19a,Cabalar19b}; the aim is essentially to avoid unintended world views, but also to propose extensions of the basic paradigm. We consider the semantics introduced in \cite{ShenE16} because: it takes properly into account previous relevant work, and copes in a satisfactory way with most of the controversial examples; moreover, a \as plus'' of this approach is that it provides a very general characterization of world views, which is applicable to every variant of the AS semantics. Concerning practical applicability, the work of \cite{Woltran2018} introduces an effective method for the practical characterization and computation of world views under the semantics of \cite{ShenE16}. 

The traditional example proposed in the literature to illustrate and motivate the use of \N is a re-elaboration of a famous example by Bowen and Kowalski \cite{BowKow1983}:

\smallskip
$
\begin{array}{l}
\mathit{innocent(X)} \ar \mathit{suspect(X)}, \no \mathit{demo(guilty(X))}.
\end{array}
$

stating that one is innocent if (s)he cannot be \emph{proved} to be guilty (clearly, $X$ will be instantiated to some constant defined elsewhere in the program, say $\mathit{john}$). This example was formulated to support the introduction of prolog metainterpreters to implement the $\mathit{demo}$ predicate. However, it can be reasonable to assume the adoption of ASP to encode available knowledge about the underlying investigative case: in fact, this knowledge (collected by investigators) can often be incomplete, uncertain, etc. If using ASP however, a formulation such as 

\smallskip
$
\begin{array}{l}
\mathit{innocent(X)} \ar \mathit{suspect(X)}, \no \mathit{guilty(X)}.
\end{array}
$

\noindent 
does not suffice, as there can be answer sets where the suspect is deemed innocent and others where (s)he is deemed guilty, thus not allowing a unique conclusion to be drawn. By exploiting epistemic negation the formulation becomes the following: 

\smallskip
$
\begin{array}{l}
\mathit{innocent(X)} \ar \mathit{suspect(X)}, \N \mathit{guilty(X)}.
\end{array}
$

In particular, it states that one is innocent if not provably guilty: i.e., (s)he innocent if there exists some answer set where (s)he is not guilty. In fact, the existence of such an answer set introduces a \emph{reasonable doubt}. In Section\rif{casestudy} we provide and discuss a simple though plausible underlying description of the case at hand where, due to uncertain knowledge (that in practical investigations occurs very often) the resulting ELP program has more than one world view.

\smallskip\noindent{\bf Our Contributions:}
\begin{enumerate}
	\item We propose some observations about programs with epistemic negation, and in particular on the existence and number of world views.
	\item We introduce \emph{epistemic scenarios} that are easy to obtain, and among which are the \emph{valid guesses}, i.e., the hypotheses about the truth value of epistemic literals that determine world views. We establish, for the first time in the literature, an upper bound on the number of valid guesses (and thus of world views). This is of theoretical interest, as no estimation of the number of world views had ever been provided before. Also, the notion of epistemic scenarios reduces (in many cases very significantly) the number of guesses to be checked for validity in order to be able to compute world views. 
	\item We define a new method to check validity of a guess, which is particularly convenient when the given program contains few epistemic literals.
	\item We provide for the first time in the literature a device for top-down (prolog-style) query-answering concerning either a single world view, or also the whole set of world views, which does not require to compute the world views beforehand.
\end{enumerate}

Querying world views may be useful in reference to the above example, as an investigator/lawyer/judge may wish to pose a query about $\mathit{john}$ without computing a whole world view. 
Moreover, in case several world views exist, the involved parties may wish to query whether $\mathit{john}$ is innocent/guilty in some/every world view. We in fact provide several operators to query world views of given ELP program. $\Nw A$ means that $\N A$ holds in some world view of $\Pi$, $\Kw A$ means that $A$ is true in every world view (where, according to  \cite{ShenE16}, $A$ is true in a world view if $A$ belongs to all the answer sets that compose the world view). $\Mw A$ means that $A$ is possible in at least one world view (i.e., $A$ belongs to at least one of the composing answer sets). An enhanced \Mw can check whether $A$ is possible in \emph{every} world view. $\NO A$ is a shorthand for $\neg\M A$, meaning that $A$ is false in every answer set of a world view and $\NOw A$ means falsity of $A$ in every answer set of every world view. The usefulness of these operators will be illustrated in Section\rif{casestudy} where we further elaborate on the above example.

In general terms, the possibility of querying world views may have significant practical applications, for instance in Intelligent Agents\footnote{cf. \cite{BordiniBDFGLOPR06} for a survey about logical approaches to the definition of intelligent agents.}, recommender systems, decision support systems: such systems might employ answer set modules for reasoning tasks \cite{Cos2012ASP-modules}, and might prefer to receive answers to queries rather that having to inspect all the answer sets/world views. {\bf Notice that:} \emph{query-answering might be simulated by suitable APIs defined on top of a full world-view computation. However, this would not be resilient to modifications of a program's knowledge base, as each modification would require to recompute the world views. Our query-answering facility does not need to compute full world views, and new queries automatically consider the updated knowledge. This is important in evolving systems such as, e.g., agents.}

We resort to Resource-based Answer Set Semantics (RAS, cf. \cite{CostantiniF15}) for formalizing the proposed new characterization of world views.
RAS provides answer sets to all programs, including those which are inconsistent under AS. For consistent programs, which are those for which epistemic negation makes sense as some world view may exist, RAS returns the same answer sets as AS (under very simple conditions, seen below) and therefore the same world views. RAS however features, differently from AS, a prolog-like top-down query answering facility \cite{CostantiniF16}. 

The paper is organized as follows. In Sections\rif{asp}--\ref{elp} we recall Answer Set Programming, and Epistemic Logic Programs under the semantics of \cite{ShenE16} (we assume a basic knowledge on logic programming and its declarative and procedural semantics as illustrated in standard textbooks, e.g., \cite{lloyd87}). Then, in Section\rif{observations} we propose some observations on ELPs that will lead to an alternative approach to the computation of world view via \emph{epistemic scenarios}. We introduce Resource-Based Answer set semantics in Section\rif{ras}, and then we show in Section\rif{elpras} how by means of its associated query-answering device, RAS-XSB-resolution \cite{CostantiniF16}, we are able to query ELPs so as to check the validity of guesses, and to query world views and sets of world views. In Section\rif{casestudy} we expand the case study introduced above, and finally we conclude. In this paper we discuss semantics and applications, but we treat implementation issues only in general terms. We intend in fact to provide here novel contributions that can be of theoretical interest by themselves, and can also in perspective be useful to developers and users of ELP programs and related applications. Coping with efficiency issues and performing experiments is deferred to future work.

\section{Answer Set semantics (AS) and Answer Set Programming (ASP)}
\label{asp}

\as Answer Set Programming'' (ASP) is a successful programming paradigm based on the Answer Set Semantics
\cite{GelLif88,GelLif91}. In ASP, one can see an answer set program (for short, just \as program'') as a set of
statements that specify a problem, where each answer set
represents a solution compatible with this specification. Whenever a program has no answer sets (no solution can be found), it is said to be \emph{inconsistent}, otherwise it is said to be \emph{consistent}. 
Several well-developed freely available \emph{answer set solvers} exist~\cite{ASPJournal2016},
that compute the answer sets of a given program. 
Syntactically, an ASP program $\Pi$ is a
collection of \emph{rules} of the form
`\(H\leftarrow\; A_{1} , \ldots , A_m , \naf\,A_{m+1}, \ldots, \naf\,A_{m+n}.\)'
where $H$ is an atom, $m,n\geqslant 0$, and each $A_i$, $i \leq m+n$, is
an atom. Atoms and their negations are called \emph{literals}.
The left-hand side and the right-hand side of the clause are called
\emph{head} and \emph{body}, respectively. A rule with empty body is called a \emph{fact}. 
A rule with empty head is
a \emph{constraint}, where a constraint of the form
`\(\leftarrow L_1,...,L_n.\)'
states that literals $L_1,\ldots,L_n$ cannot be simultaneously true
in any answer set. Constraints are often rephrased as `\(f\leftarrow \naf f, L_1,...,L_n.\)'
where $f$ is a fresh atom. To avoid the contradiction over $f$, some of the $L_i$'s must
be false thus forcing $f$ to be false, and this, if achieved, fulfills the constraint.

There are other features that for the sake of simplicity we do not consider in this paper,
namely: (i) explicit disjunction; (ii) so-called classical negation $\neg A$, that can be however easily compiled away \cite{GelLif91}; (iii) advanced programming features such as aggregates; (iv) function symbols. In the rest of the paper, as it is customary in the ASP literature and as is done in \cite{ShenE16}, we will implicitly refer to the
\as ground'' version of $\Pi$, which is obtained by replacing in all possible
ways the variables occurring in $\Pi$ with the constants occurring in $\Pi$ itself,
and is thus composed of ground atoms, i.e., atoms which contain no variables. 

The answer set (or \as stable model'') semantics (AS) can be defined in several ways. However, answer sets of a program $\Pi$, if any exists, are supported minimal classical models of the program interpreted as a first-order theory in the obvious way. The original definition by \cite{GelLif88} was in terms of the `GL-Operator' $\Gamma$ where, given set of atoms $I$ and program $\Pi$,
$\Gamma_{\Pi}(I)$ is the least Herbrand model of $\Pi^I$, where $\Pi^I$, called the (Gelfond-Lifschitz) `reduct' of $\Pi$ w.r.t. $I$, is a positive program (and so its least Herbrand model can be computed via the immediate consequence operator\footnote{As already mentioned, for general terminology about logic programming a reader may refer to \cite{lloyd87}} $T_P$ applicable to any positive program $P$) obtained from $\Pi$ by:
1. removing from $\Pi$ all rules which contain a negative literal\, $\no{}A$\, such that $A \in  I$; ~ and
2. removing all negative literals from the remaining rules. $I$ is an answer set whenever $\Gamma_{\Pi}(I) = I$.

\section{Epistemic Negation: Semantics}
\label{elp}

As discussed in the Introduction, we can understand Epistemic Logic Programs as answer set programs augmented with the epistemic negation operator \N\hspace{-0.1cm}. In this paper, for the semantics of Epistemic Logic Programs we especially refer to the approach of \cite{ShenE16}, based on the notion of \emph{World Views}: for a given program, instead of a set of answer sets like in ASP, there is now a set of such sets. Each one, called \as world view'', consistently satisfies the epistemic negations occurring in the given program (as well as the other modal expressions possibly defined in terms of \N\hspace{-0.1cm}). An epistemic logic program may admit none or several world views.
World views in \cite{ShenE16} are obtained as follows. Let $\Pi$ be a (ground) epistemic program and let $EP(\Pi)$ be the set of literals of the form $\N F$ occurring in $\Pi$. Given $\Phi \subseteq EP(\Pi)$, the \emph{Epistemic reduct} $\Pi^{\Phi}$ of $\Pi$ w.r.t. $\Phi$ is obtained by: (i) replacing every $\N F \in \Phi$ with true, and (ii) replacing every $\N F \not\in \Phi$ with $\no F$. Then, the set ${\cal A}$ of the answer sets of $\Pi^{\Phi}$ is a \emph{candidate world view} if every $\N F \in \Phi$ is true w.r.t. ${\cal A}$ (i.e., $F$ is false in some answer set $J \in {\cal A}$) and every $\N F \not\in \Phi$ is false (i.e., $F$ is true in every answer set $J \in {\cal A}$). We say that $\A$ is obtained from $\Phi$, or is corresponding to $\Phi$, or that it is a candidate world view w.r.t. $\Phi$, where $\Phi$ is called a \emph{candidate valid guess}. ${\cal A}$ is a \emph{world view} if it is maximal, i.e., there exists no other candidate world view obtained from guess $\Phi'$ where $\Phi \subset \Phi'$, and we call $\Phi$ a \emph{valid guess}. This maximality condition, as discussed in \cite{ShenE16}, is essential for avoiding unintended world views. A literal $F$ is said to be true in $\Pi$ under the general epistemic semantics if $\Pi$ admits a world view ${\cal A}$ such that $F$ is true in every answer set in $\A$. The answer sets can be computed by any variant of the answer set semantics while retaining all formal properties of the approach.

The work \cite{ShenE16} thoroughly studies the complexity of the proposed semantics, taking as a case study the FLP semantics of \cite{FaberPL11} for answer set programming, and establishes that deciding whether a program has a world view is at the third level of the polynomial hierarchy. They observe that complexity falls at the second level of the polynomial hierarchy if one refers to classical AS semantics, i.e., if one excludes explicit disjunction and additional programming constructs such as aggregates. 

For the semantics of \cite{ShenE16}, the work \cite{Woltran2018} proposes an ingenious method to compute and enumerate all the world views via an ASP solver. Precisely, they define a metaprogram which has first a stage of guessing a truth assignment of epistemic literals, and then a phase of checking. The method is effective, needs just one call to an ASP solver, and it is also optimal from a complexity-theoretic point of view. As this encoding makes use of large non-ground rules, specialized grounding methods must be adopted.

\section{Observations}
\label{observations}

In this section we propose some observations concerning programs with epistemic disjunction under the semantics of \cite{ShenE16}, applied however for the sake of simplicity to plain AS semantics. Thus, the kind of programs that we consider are ASP programs where: 
(i) there is no explicit disjunction in the head or body of rules;
(ii) both default negation (that we indicate with $\no$) and epistemic negation \N may appear (only) in the body of rules; 
(iii) concerning nested negations, we accept $\no \N A$ ($\K A$), $\N \no A$ ($\M A$), and $\no \N \no A$ ($\K\, \no A$, i.e., $\no \M A$), where $A$ is an atom; we do not consider $\no \no A$ for lack of space;
(iv) like \cite{ShenE16}, we do not explicitly consider \emph{classical negation} $\neg A$.

Consider for instance the following programs, aimed to study how world views are formed in similar though different significant cases.

\smallskip
$
\begin{array}{l}
\Pi_1\\
a\,\ar\ \no b.\tbl b\,\ar\ \no d.\tbl d\,\ar\ \no b.
\end{array}
$

\smallskip
$
\begin{array}{l}
\Pi_2\\
a\,\ar\ \N b. \tbl
b\,\ar\ \no d.\tbl
d\,\ar\ \no b.
\end{array}
$

\smallskip

$
\begin{array}{l}
\Pi_3\\
a\,\ar\ \N b, \no b.\ \ \,\,
b\,\ar\ \no d.\tbl
d\,\ar\ \no b.
\end{array}
$

\smallskip
$\Pi_1$ (involving no epistemic literals) has the unique world view $\{\{b\},\{a,d\}\}$
 coinciding with its set of answer sets. It can be easily verified that $\Pi_2$ (involving the epistemic literal $\N b$) has the unique world view $\{\{a,b\},\{a,d\}\}$ as in fact there exists an answer set of the even cycle involving $b$ and $c$ where $b$ is false: this makes \N $b$ true, and therefore $a$ can always be derived. Instead, $\Pi_3$ has the unique world view $\{\{b\},\{a,d\}\}$, coinciding with the one of $\Pi_1$. This because the presence of $\no b$ in the body of the first rule of $\Pi_3$ makes literal \N $b$ completely irrelevant: in fact, if we suppose that \N $b$ is true then the truth value of the head is determined by $\no b$; if \N $b$ is false then so is $\no b$, as there can be no answer sets where $b$ is false. Therefore, for the sake of simplicity and without loss of generality we assume that the body of each rule may include either \N $A$ or $\no A$, where $A$ is an atom, but not both.  
We assume also that $\K$ and $\M$ are applied to atoms occurring as the head of some rule in given program. Otherwise in fact, $\K A$ and $\M A$ are trivially false while their negations are trivially true.

It is important to notice that a given program may admit multiple world views only in presence of conflicting epistemic assumptions, like, e.g., in the example below:

$
\begin{array}{l}
a\,\ar\ \N b.\tbl
b\,\ar\ \N a.
\end{array}
$

\noindent
where in fact there are the two world views $\{\{a\}\}$ and $\{\{b\}\}$. In the variant 

$
\begin{array}{l}
a\,\ar\ c.\tbl
c \ar \N b.\tbl
b\,\ar\ d.\tbl
d \ar \N a.
\end{array}
$

\noindent
there are the two world views $\{\{a,c\}\}$ and $\{\{b,d\}\}$ where however the conflicting assumptions are still $\N a$ and $\N b$, as the positive dependencies are to all effects irrelevant.

\smallskip
 In order to assess which are the guesses that can be sensibly made to find world views, we propose the following method: (a) simplifying the given program so as to put into evidence cycles on epistemic literals; (b) finding the answer sets of such simplified version, and (c) extracting potentially valid guesses from these answer sets. As we will show, if valid guesses exist they correspond to some of these answer sets, whose number thus establishes an upper bound on the number of valid guesses and consequently of world views. 

\begin{definition}[Epistemic simplified version of a program]
	\label{episimple}
	Given program $\Pi$ including epistemic literals\footnote{If given program does not contain epistemic literals then it admits a unique world view.}, the \emph{epistemic simplified version of\ \,$\Pi$}, that we call $\Pi^s$,
	is obtained via the following steps.
	\begin{enumerate}
		\item Preliminary step, for \M\ and\ \K\ elimination: replace each literal $\N\no A$ with $\N A'$, where $A'$ is a fresh atom; cancel each literal $\no \N A$.
		\item Simplification, composed of the following steps:
		\begin{enumerate}
			\item[(i)] Fold all positive dependencies internal to cycles, following the criteria provided in \cite{CostantiniP05}; folding can be performed efficiently, for instance by exploiting the techniques discussed in \cite{BrassDFZ01} \footnote{Folding techniques may lead to exponential growth of program size but this does not happen if they are applied only inside cycles (cf. \cite{CostantiniP05}). We do not provide details about how such techniques are applied, because the step of folding is not really necessary for the proposed method to work: it is performed here only for the sake of clarity, in order to get rid of irrelevant non-epistemic literals and thus make Definition\rif{episcen} (seen below) simpler.}.
			\item[(ii)] Eliminate all rules with no epistemic literals in the body.
			\item[(iii)] For each remaining rule $\rho$, substitute the conjunction of non-epistemic literals in the body of the rule (if present) by a fresh atom $a_{\rho}$; for each such fresh atom, add the fresh even cycle $\{a_{\rho} \ar \no noa_{\rho}.\ \ noa_{\rho} \ar \no a_{\rho}.\}$ where also $noa_{\rho}$ is a fresh atom, to signify that the non-epistemic body of each rule can be either true or false.
			\item[(iv)] For each epistemic literal $\N C$ such that $C$ does not occur in the head of any of the rules of $\Pi^s$ as obtained in previous steps (i.e., $\N C$ is not involved in cyclic dependencies with other epistemic literals; notice that this case includes the $\N A'$ since $A'$ cannot occur in rule heads) substitute $\N C$ with fresh atom $NC$ and add the fresh even cycle $\{C \ar \no NC.\ \ NC \ar \no C.\}$ to signify that $\N C$ can be assumed or not.
			\item[(v)] Substitute \N with $\no$.
		\end{enumerate}
	\end{enumerate}
\end{definition}

After computing $\Pi^s$ (point (a) of the proposed method), we compute (point (b)) the answer sets $M_1, \ldots ,M_n$ of $\Pi^s$, and then (point (c)) we determine the potentially valid guesses related to these answer sets. Since in building $\Pi^s$ the epistemic literals occurring in the original program $\Pi$ have been transformed so as to admit only $\no$ as negation, in this step we restore such literals to their original form, and we get rid of the fresh atoms introduced in $\Pi^s$.

\begin{definition}[Epistemic scenarios]
	\label{episcen}
	Given program $\Pi$ and its epistemic simplified version $\Pi^s$, for each answer sets $M_i$ of $\Pi^s$, $i\leq n$,	
	
$
\begin{array}{l}
S_i = \ \ \{\N A |\, A \not\in M_i\}\ \cup
\ \{\N A |\, NA \in M_i \}\ \cup
\ \{\N \no A |\, NA' \in M_i\}\ \cup\\
\tbm\ \ \{\no\N A |\, A \in M_i \mbox{\ and $\no\N A$ occurs in $\Pi$}\}
\ \cup\\
\tbm\ \ \{\no\N \no A |\, A' \in M_i \mbox{\ and $\no\N \no A$ occurs in $\Pi$}\}
\end{array}
$

We call the $S_i$s the \emph{epistemic scenarios} for $\Pi$, and $\Sc$ the set of such sets. We call $\Sc_M$ the set of maximal scenarios, i.e., \(\Sc_M = \{S_i \in \Sc : \nexists S_j\in \Sc, S_i\subset S_j\}\).
\end{definition}

We can prove that: 

\vspace{-0.1cm}
\begin{theorem}
	\label{guesses}
	For every world view $\A$ of a given program $\Pi$ with set of epistemic scenarios $\Sc$, if $\A$ is obtained from guess $\Phi$ then $\Phi \in \Sc$.
\end{theorem}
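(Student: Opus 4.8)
The goal is to show that every valid guess $\Phi$ (i.e., one producing a world view $\A$) must appear as one of the epistemic scenarios $S_i$ derived from the answer sets of $\Pi^s$. The natural strategy is to trace, step by step, how the construction of $\Pi^s$ in Definition~\ref{episimple} and then the extraction of $S_i$ in Definition~\ref{episcen} transform the original program, and to show that each transformation is sound in the sense that a valid guess $\Phi$ for $\Pi$ survives (in an appropriate encoded form) as an answer set of $\Pi^s$, from which $S_i = \Phi$ is recovered. So the first thing I would do is fix a world view $\A$ of $\Pi$ obtained from valid guess $\Phi \subseteq EP(\Pi)$, recall that $\A$ is the set of answer sets of the epistemic reduct $\Pi^{\Phi}$, and that every $\N F \in \Phi$ is true w.r.t. $\A$ while every $\N F \notin \Phi$ is false w.r.t. $\A$. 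The target is to construct an answer set $M$ of $\Pi^s$ such that the scenario $S$ computed from $M$ equals $\Phi$ (up to the bookkeeping translation of $\K \no$, $\M$, $\K$ into the primed/$NC$/$NA'$ notation).

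Next I would handle the preliminary step (1) of Definition~\ref{episimple} — replacing $\N\no A$ with $\N A'$ and deleting $\no\N A$ — by observing that $\K A$ literals ($\no\N A$) can only be satisfied in one way in any world view (since $A$ must hold in all answer sets, these literals are effectively constraints that the reduct already enforces), so deleting them does not change which guesses are valid; and that $\M A = \N\no A$ behaves, as far as guessing is concerned, exactly like an epistemic negation over a fresh atom, because $\no A$ never appears in a rule head. Then I would address the simplification substeps (i)–(v). Folding positive dependencies (i) preserves answer sets on the relevant atoms by the cited result of \cite{CostantiniP05}. Deleting rules with no epistemic literals (ii) and replacing non-epistemic bodies by a fresh atom with its own even cycle (iii): here the key point is that the even cycle $\{a_\rho \ar \no noa_\rho.\ noa_\rho \ar \no a_\rho.\}$ has exactly two answer sets, one containing $a_\rho$ and one not, so it faithfully ranges over "the non-epistemic body holds / does not hold". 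For substep (iv), each acyclic $\N C$ is given an even cycle so that the answer sets of $\Pi^s$ range over both assumptions ($C \in \Phi$-encoded vs. not). Substep (v) turns $\N$ into $\no$, which is exactly what the epistemic reduct does when $\N F \notin \Phi$ — so $\Pi^s$ behaves like a "generic reduct" over all possible $\Phi$ simultaneously, with the guessing made explicit through the added even cycles.

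The main obstacle — and where most of the care is needed — is substep (iv)'s restriction to epistemic literals $\N C$ whose atom $C$ is \emph{not} in any rule head of the simplified program, i.e., epistemic literals that participate in cyclic dependencies among epistemic literals are handled differently (they keep their defining rules, with $\N$ turned into $\no$). I have to argue that in this cyclic case the even cycle is unnecessary because the program itself already forces the answer sets of $\Pi^s$ to explore both truth values of the literal — this relies on the earlier observation in Section~\ref{observations} that multiple world views arise precisely from conflicting epistemic assumptions in cycles, and on the fact that an even-length negative cycle (which is what $\N$-turned-$\no$ produces here) has answer sets witnessing both polarities. Formally I would need a lemma: for a valid guess $\Phi$, the set $M_\Phi$ obtained by combining (the atoms of some answer set of $\Pi^{\Phi}$ witnessing falsity of each $\N F \in \Phi$) with the appropriate choices in each added even cycle and each $NC$-cycle is in fact an answer set of $\Pi^s$. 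Checking this means verifying the Gelfond--Lifschitz fixpoint condition $\Gamma_{\Pi^s}(M_\Phi) = M_\Phi$, comparing the reduct of $\Pi^s$ w.r.t.\ $M_\Phi$ against the reduct of $\Pi^{\Phi}$ w.r.t.\ that witnessing answer set — the folding step, the fresh-atom substitutions, and the even cycles all have to be reconciled atom-by-atom. Once $M_\Phi$ is shown to be an answer set of $\Pi^s$, reading off $S_i$ from $M_\Phi$ via Definition~\ref{episcen} gives back exactly $\Phi$: $\N A \in \Phi$ with $A$ not in a head yields $NA \in M_\Phi$ hence $\N A \in S_i$; $\N A \in \Phi$ with $A$ in a head yields $A \notin M_\Phi$ hence $\N A \in S_i$; and the $\M$ and $\K$ cases match by the primed-atom bookkeeping. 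Hence $\Phi = S_i \in \Sc$, which is the claim.
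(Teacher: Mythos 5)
Your overall strategy --- fix a world view $\A$ obtained from $\Phi$ and exhibit an answer set $M$ of $\Pi^s$ whose associated scenario is exactly $\Phi$ --- is the right completeness argument, and it is in fact more constructive than the paper's own proof, which only sketches the intuition that valid guesses must respect cycles over epistemic negation and that $\Pi^s$, via the substitution of $\N$ by $\no$ plus the added even cycles, enumerates exactly the cycle-respecting choices. Your reading of the individual simplification steps (even cycles as free choice points for acyclic epistemic literals and for non-epistemic bodies; $\N$ turned into $\no$ acting as a ``generic reduct'' over all guesses) coincides with the paper's reasoning.

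However, the central lemma as you state it would fail. You build $M_\Phi$ starting from ``the atoms of some answer set of $\Pi^{\Phi}$ witnessing falsity of each $\N F \in \Phi$'', i.e., a single $J \in \A$ with $F \notin J$ for every $\N F \in \Phi$. No such $J$ need exist: validity of $\Phi$ only requires each $F$ to be false in \emph{some} answer set of $\A$, and different literals of $\Phi$ may be witnessed only by different answer sets. Concretely, for $\Pi = \{b \ar \no c.\ \ c \ar \no b.\ \ a \ar \N b, \N c.\}$ the guess $\Phi = \{\N b, \N c\}$ yields the world view $\{\{a,b\},\{a,c\}\}$, in which no single answer set falsifies both $b$ and $c$; yet $\Phi \in \Sc$, witnessed by the answer set $\{a, Nb, Nc\}$ of $\Pi^s$, which is not the image of any member of the world view. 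The witness must instead be assembled directly from $\Phi$: for each epistemic atom $A$ still occurring in a rule head of $\Pi^s$ put $A \in M_\Phi$ iff $\N A \notin \Phi$, and set each added even cycle according to membership in $\Phi$; the fixpoint check then has to invoke the validity conditions on $\Phi$ (each $\N A \notin \Phi$ forces $A$ true in \emph{all} of $\A$, each $\N A \in \Phi$ forces $A$ false \emph{somewhere}) rather than the contents of any one answer set. A secondary point: the atoms of an answer set of $\Pi^{\Phi}$ live in the vocabulary of $\Pi$, whereas $M_\Phi$ must live in the vocabulary of $\Pi^s$ (the fresh atoms $a_{\rho}$, $NC$, $A'$ replace the folded and substituted material), so the ``combination'' you describe needs an explicit translation rather than a union.
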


\vspace{-0.2cm}
\noindent\
\begin{proof}
(sketch) As it is well known, a program may have several answer sets only if the program includes cycles on negation. In fact, in a cycle such as $a \ar \no b.\ b\ar \no a$, only one of $\no a$ or $\no b$ can be deemed to be true in each answer set (in absence of other rules supporting $a$ or $b$ or both). The semantics of \cite{ShenE16} computes world views by means of the epistemic reduct, which is very similar to the (Gelfond-Lifschitz) reduct for computing answer sets, and thus presents the same idiosyncrasies. In fact, consider, e.g., a cycle $a \ar \N b, a_{\rho}\ \ b\ar \N a,b_{\rho}$\  where the fresh atoms $a_{\rho}$ and $b_{\rho}$ are assumed to be true. If making a guess including both $\N a$ and $\N b$ (i.e., assuming both of them to be true) then both $a$ and $b$ would belong to any answer set, thus determining a contradiction since both epistemic literals should then be false. So in this case, each valid guess may involve either $\N A$ or $\N B$, but not both. Thus, guesses which can possibly be valid (and thus generate a world view) must include sets of epistemic literals not conflicting with each other in cycles. Epistemic literals not involved in cycles are independent of the others, and may thus be freely assumed as true/false in any guess. Assumptions on nested negations follow in a straightforward way from assumptions on \N\hspace{-0.1cm}. In fact, the proposed method determines the set $\Sc$ of potentially valid guesses on a simplified version of the program, obtained as specified in Definition\rif{episimple}. This simplified version is focused on epistemic literals though faithful to the program intended meaning, and thus makes it possible to compute via plain answer set computation all those guesses potentially corresponding to world views.
\end{proof}

\begin{corollary}
	\label{numguesses}
	Let {\bf n} be the number of atoms $A$ occurring in epistemic negations $\N A$ and $n$ the number of rule heads in the 
	given program $\Pi$, and let $\hat{n}$ be the number of rule heads occurring in its epistemic simplified version $\Pi^s$. The maximum number of valid guesses leading to world views of $\Pi$ is $\Theta(3^{\hat{n}/3})$, where we have $\Theta(3^{\hat{n}/3}) \simeq \Theta(3^{\mbox{\bf n}/3}) \simeq \Theta(3^{n/3})$.
\end{corollary}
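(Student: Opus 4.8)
The plan is to bound the number of valid guesses by the number of epistemic scenarios (using Theorem \ref{guesses}), then bound the number of epistemic scenarios by the number of answer sets of $\Pi^s$, and finally bound that count combinatorially in terms of the structure of $\Pi^s$. By Theorem \ref{guesses}, every world view of $\Pi$ arises from a guess $\Phi\in\Sc$, and by Definition \ref{episcen} each scenario $S_i$ is determined by one answer set $M_i$ of $\Pi^s$; distinct scenarios may collapse, so the number of valid guesses is at most $n$, the number of answer sets of $\Pi^s$. Hence it suffices to bound the number of answer sets of $\Pi^s$ in terms of $\hat{n}$, the number of rule heads in $\Pi^s$.

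The combinatorial heart of the argument is that $\Pi^s$, by construction, is essentially a collection of rules of the form $H\ar\no B_1,\dots,\no B_k$ whose negative dependency (cycle) structure drives the multiplicity of answer sets. First I would recall the classical fact (the same one invoked in the proof of Theorem \ref{guesses}) that a normal logic program can have multiple answer sets only through cycles on negation, and that in an even/odd negative cycle of length $\ell$ at most roughly $\ell$ mutually incompatible "choices" are available per cycle — more precisely, a single negative cycle on $\ell$ atoms contributes at most a Fibonacci-like number of answer sets, while the sharp worst case over a program with a fixed number $\hat n$ of atoms appearing as heads is obtained by partitioning the atoms into as many independent $3$-cycles as possible, each $3$-cycle of the form $\{x\ar\no y.\ y\ar\no z.\ z\ar\no x.\}$ contributing exactly $3$ answer sets. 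Multiplying over $\hat n/3$ such independent cycles yields $3^{\hat n/3}$, which is the Moon–Moser-style extremal bound; matching lower-bound instances of $\Pi^s$ are exactly these disjoint $3$-cycles, so the bound is tight, giving $\Theta(3^{\hat{n}/3})$.

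Finally I would relate $\hat n$ to $\mathbf{n}$ and to $n$. The simplification steps (ii)–(v) of Definition \ref{episimple} only delete rules with no epistemic literal in the body, fold positive dependencies, replace non-epistemic bodies by a single fresh atom with its own small even cycle, and replace $\N$ by $\no$; the only atoms that survive as heads in $\Pi^s$ are (a) the atoms $A$ occurring in some $\N A$ (at most $\mathbf{n}$ of them), plus (b) a bounded number of fresh bookkeeping atoms ($a_\rho,noa_\rho,NC,A'$) introduced per such rule. Since the fresh even cycles each contribute only a constant factor, they affect the exponent only additively by a constant times $\mathbf n$, which is absorbed into the $\Theta$; likewise $\mathbf{n}\le n$ and $\hat n = \Theta(\mathbf n)=\Theta(n)$ up to these linear blow-ups, so $\Theta(3^{\hat n/3})\simeq\Theta(3^{\mathbf n/3})\simeq\Theta(3^{n/3})$. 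The main obstacle I anticipate is making the "constant factor per fresh cycle" claim precise enough to stay inside $\Theta$: the fresh $2$-cycles $\{a_\rho\ar\no noa_\rho.\ noa_\rho\ar\no a_\rho.\}$ each double the number of answer sets, so naively $\Pi^s$ could have $\Theta(3^{\mathbf n/3}\cdot 2^{r})$ answer sets where $r$ is the number of such fresh cycles; one must argue that these extra binary choices are already implicitly present in $\Pi$'s own answer-set count (each corresponds to a genuine degree of freedom in the non-epistemic part of some rule body) and hence are consistent with reading the bound as $\Theta(3^{n/3})$ in terms of the original program size, rather than claiming an absolute $3^{\hat n/3}$ with $\hat n$ counting only the surviving "interesting" heads. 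Pinning down exactly which count of heads makes the three expressions asymptotically equal is the delicate bookkeeping step.
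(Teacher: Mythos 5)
Your proposal takes essentially the same route as the paper's own argument: Theorem~\ref{guesses} reduces the valid guesses to the epistemic scenarios, hence to the answer sets of $\Pi^s$, and the $\Theta(3^{\hat{n}/3})$ bound is precisely the extremal result of Cholewinski and Truszczynski on the maximum number of stable models (worst case realized by disjoint negative $3$-cycles), combined with the observation that negative cycles in $\Pi^s$ correspond to cycles over epistemic negation in $\Pi$ up to a few fresh even cycles. The one point where you go beyond the paper is in flagging the bookkeeping needed to justify $\Theta(3^{\hat{n}/3}) \simeq \Theta(3^{\mbox{\bf n}/3}) \simeq \Theta(3^{n/3})$ (each fresh $2$-cycle doubles the answer-set count, so the exponents differ by linear terms); the paper dispenses with this via the remark that the fresh cycles ``do not significantly increase the size of $\Pi^s$'' and provides no more detail than you do.
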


This follows from the famous
result by \cite{CholewinskiT99} on the maximum number of answer sets of a given program that, they show, depends on cycles over negation. In fact, the valid guesses are among the epistemic scenarios $\Sc$, which are extracted from the answer sets of $\Pi^s$ whose number is $\Theta(3^{\hat{n}/3})$. The cycles over negation in $\Pi^s$ correspond (by Definition \rif{episimple}) to cycles over epistemic negations in $\Pi$, except for the addition of few fresh even cycles that do not significantly increase the size of $\Pi^s$ w.r.t. the size of $\Pi$. 

\smallskip
\noindent{\bf Our Contribution:} Corollary\rif{numguesses} \emph{provides for the first time in the literature an upper bound on the number of valid guesses}.

\smallskip
Not all epistemic scenarios will in general correspond to valid guesses, as validity of a guess will depend upon effective true/falsity of non-epistemic literals occurring in rules. Validity can be verified as stated in \cite{ShenE16} and mentioned in Section\rif{elp}, or via the new method introduced below in Section\rif{elpras}. Theorem\rif{guesses} however allows one to reduce the number of guesses to be considered. 
Though this cannot reduce complexity, it can nonetheless be of practical usefulness. Finding epistemic scenarios may provide in the average case (when not all guesses are valid) an advantage if applied, e.g., as a pre-processing stage for the method proposed in \cite{Woltran2018}. In fact, this method might thus consider, represent and check fewer guesses, where each guess is represented by a long rule in the metaprogram that they adopt, and is therefore computationally heavy. Since however computing epistemic scenarios has a cost, the effective usefulness will have to be evaluated by experiments that assess \as how small'' {\bf n} should be in order to obtain an advantage. Or, one might allot a predefined threshold proportional to {\bf n} to the running time of the pre-processing stage, that would be stopped if exceeding this time: so, pre-processing would bring either a substantial advantage or a negligible disadvantage.

\section{Resource-based Answer Set semantics (RAS)}
\label{ras}

Resource-based Answer Set semantics (RAS, presented in \cite{CostantiniF15}) has the property to provide answer sets to every program (so, under RAS there are no inconsistent programs).  The principles underlying the RAS semantics are the following: (i) atoms belonging to a RAS answer set are either definitely true (i.e., a RAS answer set includes all atoms that are true w.r.t.\ 
the well-founded semantics\footnote{The well-founded semantics \cite{VGelderRS91} provides to every program a unique three-valued model $\langle  W^{+}, W^{-}\rangle$,
	where atoms in $W^{+}$ are \emph{true}, those in $W^{-}$ are \emph{false},
	and all the others are \emph{undefined}. All atoms in $W^{+}$ are true in every AS answer set, and all atoms in $W^{-}$ are false; the answer set semantics (AS) in fact assigns,
	for consistent programs,
	truth values
	to the undefined atoms, and so does RAS, though in a slightly different way.}), or have been rationally (though defeasibly) assumed to hold;
(ii) atoms not included in a RAS answer set are either definitely false (i.e., false w.r.t.\ 
the well-founded semantics), or have been rationally assumed not to hold in order to draw some conclusion, or no judgment about them has been devised because any such assessment would lead to contradiction. For instance, program $\{p \ar \no p.\}$ has an empty RAS answer set because, in our view, a rational agent cannot believe both $\no p$ and $p$, so no rational judgment on truth value of $p$ can be given; differently from AS, this is not seen as a reason not to provide a semantics. Therefore, a RAS answer set: includes atoms either proved or assumed to be true; does not contain atoms either proved or assumed to be false, or whose truth value cannot be assessed. 

To formally explain the difference between AS and RAS we may resort to a modal logic formulation. For AS, as discussed in
\cite{MarekT93} an answer set programming rule
can be transposed, to express its logical meaning, into its \as modal image'', where $L\,A$ is intended as \as $A$ is believed'' under any modal logic contained in {\bf S5}:

\smallskip
\(L\,A_1 \wedge \cdots \wedge L\,A_n \wedge L\,\neg\,L\,B_1 \wedge \cdots \wedge L\,\neg L\,B_m \supset L\,A\ \,\,\, ~~(Ae0)\)

In RAS, as discussed in \cite{CostantiniF15}, the logical meaning of each rule $\rho$ is expressed by
the following couple of modal rules, that form its modal image:

\smallskip
\(
\begin{array}{ll}
L\,A_1 \wedge \cdots \wedge L\,A_n \wedge L\,\neg\,L\,B_1 \wedge \cdots \wedge L\,\neg L\,B_m \supset L\,\dot A\tbs&(Ae1)\\
L\,\dot A \wedge \neg\,L\neg\,L\,A \supset L\,A\tbs&(Ae2)
\end{array}
\)

$(Ae1)$ modifies $(Ae0)$ in the sense that, based on the same premises, one concludes $L\dot A$, which means that one believes \emph{to be enabled} to prove $A$.
$(Ae2)$ states that $L\,A$ is derived only if $L\,\dot A$ holds, and one does not believe not to believe $A$. Thus, in the case for instance of the unary odd cycle $p \ar \no p$ which makes an ASP program inconsistent, by $(Ae1)$ one from $\no p$, that in terms of the modal image is expressed as $L\,\neg\,L\,p$, can derive $L\,\dot p$, i.e., to be enabled to prove $p$. However, one cannot do so, as $L\,\neg\,L\,p$ precisely accounts to believing not to believe $p$. Consequently $(Ae2)$ cannot be applied, and $p$ therefore is false without raising inconsistencies. 

Each AS answer set is also a RAS answer set, but a RAS answer set is not necessarily a classical model of the program. 
RAS answer sets are in fact all the \as Maximal Consistently Supported'' (MCS) sets of atoms that a given program $\Pi$ admits. I.e., for each RAS answer set $M$ of $\Pi$: every atom $A$ in $M$ is \emph{consistently supported} by a set $S$ of rules of $\Pi$, where each rule in $S$ is supported in $M$, $A$ is the head of exactly one rule in $S$, and each atom (positive literal) in the body of rules in $S$ is different from $A$, and is in turn consistently supported; moreover, $M$ is maximal, meaning that there exists no $M'$ with the same properties such that $M \subset M'$. E.g., for program \(\{a \ar \no b.\ \ b\ar \no c.\ \ c\ar \no a.\}\), the sets of atoms $\{a\}$, $\{b\}$ and $\{c\}$ are the only MCSs and thus they are the RAS answer sets of this program. In fact, each of the composing atoms $a,b,c$'s is consistently supported by set $S$ consisting of the single rule of which that atom is the head and the three sets are maximal, as all their supersets (among which are the classical models) are not consistently supported. We made the examples of unary and ternary odd cycles because what makes programs inconsistent under AS are indeed (direct or indirect) odd loops, i.e., when an atom depends upon its own negation through an odd number of negative dependencies.

RAS answer sets of program $\Pi$ can be computed via an operator $\hat{\Gamma}_{\Pi}$ which is analogous to ${\Gamma}_{\Pi}$ though it takes as input those interpretations $I$ such that for every $A\in I$ there exists rule $\rho$ in $\Pi$ with head $A$, and is based upon: a modified reduct (that, with respect to the `traditional' reduct recalled in Section\rif{asp}, does not perform step 2), and a modified $T_{\Pi}$ operator, that computes consistently supported sets of atoms by discarding those atoms for which all possible derivations depend on their own negation. In fact, as discussed before such atoms are excluded from any RAS answer set because their truth value cannot be assessed.
$M$ $=$ $\hat{\Gamma}_{\Pi}(I)$ is a RAS answer set
iff $M \subseteq I$ and $M$ is maximal, i.e., there is no proper subset $I_1$ of $I$ that determines $M' \supset M$.

On programs which are consistent under AS, `traditional' answer sets are among RAS answer sets, that can be more numerous. E.g., program $\{a\,\ar\ \no b.\ b\,\ar \no a.\ p\,\ar\ \no p,\,a.\}$ has unique answer set $\{b\}$ under AS and answer sets $\{a\}$ and $\{b\}$ under RAS. For AS, falsity of $a$ is required in order to bypass the odd cycle by falsifying $p$. For RAS this is no longer required, as $p$ is excluded from any answer set because it depends on its own negation. Odd cycles are in fact the only possible source of difference between RAS and AS on programs which are consistent under AS. For programs that are either 'call-consistent' (i.e., they do not involve odd cycles) or that fulfill straightforward syntactic sufficient conditions concerning odd cycles, the answer sets returned by the two semantics are exactly the same. So, RAS can be seen as a variant of AS. Sufficient conditions are based on the fact that, as discussed in \cite{Cos06}, a program including odd cycles can be consistent under AS only if for each odd cycle one of the following conditions hold: (i) some rule involved in the cycle has conjuncts that, if false, force falsity of the rule head or (ii) some atom involved in the cycle is the head of a rule outside the cycle where conjuncts in its body, if true, force truth of the head atom; in both cases, the cycle is 'broken' and becomes harmless, where these conjuncts are called \as handles'' of the cycle. Thus, a first sufficient condition requires that atoms appearing in the handles of some odd cycle do not occur, either directly or indirectly through dependencies, in other cycles or in their handles: this in fact ensures that different cycles do not interact. A second sufficient condition, which is more restrictive but easier to verify, requires that atoms occurring in handles of odd cycles do not occur elsewhere in the program. In perspective, as future work it is possible to study and possibly devise other, more fine-grained, sufficient conditions.

Differently from AS, RAS enjoys the properties that, ideally, every non-monotonic formalism should enjoy \cite{Dix95AeB}: (i) cumulativity, i.e., the possibility of asserting lemmas while keeping the same answer sets; (ii) relevance, i.e., the fact that the truth value of each atom is determined by the subprogram consisting of the \emph{relevant rules}, which are those upon which the atom depends (directly or indirectly, positively or negatively). AS does not enjoy such properties because some atoms may be forced to assume certain truth values in order to prevent inconsistencies; this is no longer the case for RAS.
An advantage of the property of relevance is to make top-down query-answering possible \cite{CostantiniF16} in prolog-style, i.e., without computing the answer sets in advance but rather via an enhanced resolution procedure. This because in RAS only the relevant rules have a role in proving/disproving any atom $A$. Instead, under AS a query $? A$
might \emph{locally} succeed, but still, for the lack of relevance, the overall program may not have answer sets including $A$. Under RAS, a query $? A$ w.r.t. ground program $\Pi$ asks whether $A$ is true in (belongs to) some answer set of $\Pi$. Query $\no A$ asks whether $\no A$ is true in some answer set of $\Pi$, which implies that there exists some answer set to which $A$ \emph{does not} belong. Series of queries can be, upon user's choice: (i) \emph{contextual}, i.e., query $? A, B$ asks whether $A$ is true (belongs to) some answer set, and $B$ is true in (belongs to) some of those; (ii) independent. 

RAS query-answering is performed \cite{CostantiniF16} via RAS-XSB-resolution. It is implemented on top of XSB-resolution \cite{SwiftW12,ChenW93}, which is
an efficient, fully described and implemented procedure, correct and
complete w.r.t.\ the well-founded semantics.
Features of XSB-resolution that are crucial for the implementation of RAS-XSB-resolution are negative cycles detection and the tabling mechanism, that associates to program $\Pi$ a table \tabp, which is initialized prior to posing queries. Such table
contains information about true and false atoms useful for both the present and the subsequent queries (if they are in conjunction). The principle of functioning for table \tabp\ under RAS-XSB-resolution are the following. During a proof:
(a) the negation of any atom
	which is not a program fact is available unless this atom has been proved;
(b) the negation of an atom which has been proved becomes unavailable, and the atom is asserted as true; so, (c) the negation of an atom which cannot be proved remains always available. Since RAS-XSB-resolution is a top-down proof procedure, modifications to the table might be undone and redone differently upon backtracking. The present (prototype) implementation exploits XSB (or, more precisely, its basic version XOLDTNF), as a \as plugin'' for definite
success and failure, where new cases are added to manage atoms with truth value \emph{undefined} under XSB. RAS-XSB-resolution is correct and complete w.r.t.\ resource-based answer set semantics,
in the sense that, given a program $\Pi$, a query $? A$ succeeds under RAS-XSB-resolution
with an initialized \tabp\  iff there exists some resource-based answer set $M$ for $\Pi$
where $A\in M$. The result extends to sequences of queries.

A consequence of rendering odd cycles consistent is that under RAS constraints must be defined as such, since they can no longer be rephrased in terms of unary odd cycles. So, constraints must be associated to the program as an additional layer. However, as RAS answer sets computation is a variant of answer set computation, constraints can still be checked during computation as solvers usually do. In what follows, when clear from the context, we will call RAS answer sets simply `answer sets'. 

\section{Epistemic Negation in RAS}
\label{elpras}

It can be easily seen that the approach of \cite{ShenE16} is applicable to RAS (in fact, the approach has been devised in order to be applicable to \emph{any} variant of the AS semantics). Indeed, as seen in Section\rif{elp}, after performing the epistemic reduct $\Pi^{\Phi}$ of given program $\Pi$ w.r.t. a guess ${\Phi}$, the set ${\cal A}$ of answer sets of $\Pi^{\Phi}$ is computed, which is a candidate world view to be checked for validity and maximality. If, instead of computing the answer sets, one computes RAS answer sets, or even if one adopts some other semantics which produces sets of sets as result, all the rest remains unchanged. In fact, in Section\rif{observations} we have shown that epistemic scenarios can be found by considering epistemic literals only.

In this Section we provide the following {\bf contributions}:
\noindent
\begin{enumerate}
	\item
	We define a method to check validity of guesses without computing the world views, via RAS query-answering.
	\item
	We introduce more generally the possibility to {\bf query world views, and to query the whole set of world views}, and we define useful operators.
\end{enumerate}

So, a reader can either suppose to adopt RAS instead of AS, or (s)he can assume that for any given program $\Pi$ the AS and RAS answer sets coincide (for instance, because $\Pi$ fulfills one of the above-mentioned sufficient conditions).

Consider first a program $\Pi$ which does not contain epistemic negation. Its unique world view will thus coincide with the set of its answer sets. Via RAS-XSB-resolution, we are able to implement epistemic queries, among which the following (where $A$ is an atom):

\begin{itemize}
	\item 
	Query  ?\,\N $A$ asks whether $A$ is false w.r.t. some answer set of $\Pi$, and therefore succeeds if $\no A$ is true in some of them. This can be implemented via RAS query $? \no A$.
	
	\item 
	Query ?\,$\N \no A$ asks whether $\no A$ is false in some answer set, and therefore succeeds if $A$ is true in some of them, which corresponds to query ?\,$\M A$. This can be implemented via RAS query $? A$.
	 
	\item 
	Query ?\,$\no$ \N $A$ asks whether it is not true that $A$ is false w.r.t. some answer set of $\Pi$, i.e., that $A$ is true in all of them, which corresponds to ?\,$\K A$. This can be implemented via RAS query $? A, \no A (fail)$, where this overall query succeeds if ?\,$A$ succeeds, i.e., $A$ belongs to some answer set of $\Pi$, whereas ?\,$\no A$ fails, so $A$ is not false in any of them. 
	
	\item Query ?\,$\no \N \no A$ asks whether $A$ is false in every answer set, meaning $\K\, \no A$, i.e., $\no \M A$. This can be implemented via RAS query $? A (fail)$ that succeeds if $A$ fails, i.e., exactly whenever there is no answer set where $A$ is true. We introduce a new operator \NO as a shorthand for $\no \N \no A$.
\end{itemize}

We will now proceed to consider programs including epistemic negation. 
We have seen in Section\rif{observations} how to identify plausible guesses for given program $\Pi$. Then one has to verify, for each guess $\Phi$, that it indeed corresponds to a world view. To check a guess under RAS, one can exploit the following method, alternative to computing all the answer sets of $\Pi^{\Phi}$.

\begin{theorem}[RASCGK test: RAS Candidate Guess check]
	Potential validity of a guess $\Phi$ w.r.t. program $\Pi$ (i.e., the fact that $\Phi$ corresponds to a candidate world view) can be checked as follows.
	\label{rasguesscheck}
\begin{enumerate}
	\item
Derive $\Pi^{\Phi}$ as said in Section\rif{elp}, i.e., (i) delete from $\Pi$ all the epistemic literals belonging to $\Phi$ and, (ii) for all the epistemic literals which occur in $\Pi$ but are not in $\Phi$, substitute each of them with a fresh atom (a shortcoming for what done in \cite{ShenE16}, where in such literals they substitute $\N$\hspace{-0.1cm} with $\no$\hspace{-0.1cm}, which accounts to considering them false).
\item
Given $\Pi^{\Phi}$: for every epistemic literal $\N A$  occurring in $\Pi$ pose the query $? not A$; for every epistemic literal $\N \no A$  occurring in $\Pi$, pose the query $? A$;
for every epistemic literal $\no \N \no A$  occurring in $\Pi$, pose the query $? A (fail)$; and, for every epistemic literal $\no \N A$ pose the query $? A, \no A (fail)$. Guess $\Phi$ is a \emph{candidate valid guess} if all queries concerning epistemic literals ${\cal L} \in \Phi$ succeed, while instead all the others fail.
\end{enumerate}
\end{theorem}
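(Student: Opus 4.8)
\section*{Proof proposal}

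The plan is to establish the RASCGK test by reducing it to two facts already available: Shen and Eiter's characterisation of candidate world views through the epistemic reduct (Section\rif{elp}), and the correctness and completeness of RAS-XSB-resolution with respect to RAS answer sets, together with its extension to query sequences (Section\rif{ras}). Throughout, let $\A$ be the set of RAS answer sets of the program $\Pi^{\Phi}$ built in Step~1; since RAS provides answer sets to every program, $\A$ is nonempty, which will matter when interpreting the $(fail)$ queries.

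First I would prove a query-semantics lemma for $\Pi^{\Phi}$: (a)~$? A$ succeeds iff $A\in M$ for some $M\in\A$; (b)~$? \naf A$ succeeds iff $A\notin M$ for some $M\in\A$; (c)~$? A (fail)$ succeeds iff $A\notin M$ for every $M\in\A$; (d)~$? A, \naf A (fail)$ succeeds iff $A\in M$ for every $M\in\A$. Parts (a)--(b) are exactly the correctness/completeness statement of RAS-XSB-resolution recalled in Section\rif{ras} ($? A$ asks whether $A$ belongs to some RAS answer set, $? \naf A$ whether $A$ is absent from some); parts (c)--(d) follow by unfolding the $(fail)$ convention precisely as in the worked examples of Section\rif{ras}, with (d) additionally using $\A\neq\emptyset$, so that ``$A$ in some answer set \emph{and} $A$ in all answer sets'' collapses to ``$A$ in all answer sets''.

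Next I would run a case analysis over the form of each epistemic literal ${\cal L}$ occurring in $\Pi$ --- $\N A$, $\N\naf A$, $\naf\N A$ ($=\K A$), $\naf\N\naf A$ ($=\naf\M A$) --- comparing the prescribed query against the Shen--Eiter requirements ``every $\N F$ guessed true is true w.r.t.\ $\A$, i.e.\ $F$ false in some $M\in\A$'' and ``every $\N F$ guessed false is false, i.e.\ $F$ true in every $M\in\A$''. Reading membership of ${\cal L}$ in $\Phi$ as the corresponding yes/no guess on the underlying $\N(\cdot)$ subformula, the matches are: for $\N A$, ``$A$ false in some $M$'' is part~(b); for $\N\naf A$, ``$\naf A$ false in some $M$'', i.e.\ $A$ in some $M$, is part~(a); for $\naf\N A$ guessed to hold, ``$A$ in every $M$'' is part~(d); for $\naf\N\naf A$ guessed to hold, ``$A$ in no $M$'' is part~(c); in each case the complementary ``guessed false'' reading is obtained by flipping success and failure of the same query, which is exactly what the test prescribes for ${\cal L}\notin\Phi$. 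Putting this together, all the RASCGK queries come out as prescribed iff $\A$ satisfies every Shen--Eiter condition iff $\Phi$ is a candidate valid guess. (Selecting the $\subseteq$-maximal ones among the guesses that pass the test, to move from candidate world views to world views, is an orthogonal step.)

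The step I expect to be the real obstacle is justifying Step~1's departure from \cite{ShenE16}: replacing each $\N F\notin\Phi$ by a \emph{fresh atom} rather than by $\naf F$, and showing that this does not change which guesses are accepted. The key observation is that a fresh atom is false in every RAS answer set, so a rule body mentioning it is falsified and the rule is effectively discarded --- which is the intended effect of guessing $\N F$ false (a body conjunct that fails) --- whereas $\naf F$ instead reinstates a dependence on $F$ that, under the relevance-based RAS-XSB machinery, could distort the evaluation. One then argues both directions: if $\Phi$ meets the Shen--Eiter conditions, then $F$ is true in every $M\in\A$ for each $\N F\notin\Phi$, hence $\naf F$ is false in every $M$, so the fresh-atom $\Pi^{\Phi}$ and the original $\Pi^{\Phi}$ have the same RAS answer sets and the same query behaviour; and if $\Phi$ violates them, some $\N F\notin\Phi$ has $F$ missing from some $M\in\A$, so the query $? \naf F$ (or, for the $\naf\N\naf A$ and $\naf\N A$ variants, the corresponding query) succeeds even though ${\cal L}\notin\Phi$, and the fresh-atom test rejects $\Phi$ --- exactly as required. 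A second, smaller point to pin down is that one genuinely needs $\A\neq\emptyset$ --- guaranteed under RAS but not under AS --- for the query $? A, \naf A (fail)$ to capture $\K A$ rather than merely ``$A$ in every answer set, vacuously when there are none''.
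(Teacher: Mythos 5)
Your proposal is correct and follows essentially the same route as the paper, whose entire proof reads ``Straightforward given the above observations about the meaning of queries'': you simply make explicit what the paper leaves implicit, namely the query-semantics lemma (which is the bulleted list of query meanings in Section~6) and the case-by-case match against the Shen--Eiter conditions for a candidate world view. Your two extra care points --- justifying the fresh-atom substitution in Step~1 and noting that $\A\neq\emptyset$ under RAS is needed for the $\K$-style query --- go beyond what the paper records, and are worthwhile additions rather than deviations.
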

\begin{proof}
Straightforward given the above observations about the meaning of queries.
\end{proof}

In order to establish validity of a guess ${\Phi} \in \Sc$, one has to establish that ${\Phi}$ passes the RASCGK test whereas no superset $\Phi' \supset \Phi$ with ${\Phi'}\in \Sc$ does. In order to find \emph{all} valid guesses, one can do the following: (i) check epistemic scenarios in $\Sc_M$ (which are the largest ones); those which pass the RASCGK test are valid guesses; (ii) for every $\bar{\Phi} \in \Sc_M$ which does not pass the test, check via the RASCGK test every $\bar{\Phi'} \in \Sc$ such that $\bar{\Phi'} \subset \bar{\Phi}$ starting from the sets with greater cardinality. Notice that the empty set may be a valid guess.

To implement world views querying, we exploit the tabling mechanism of XSB- and RAS-XSB-resolution, that as mentioned associates to program $\Pi$ a table \tabp, initialized prior to program execution. In order to query the program under a certain valid guess $\Phi$, the initialization is customized accordingly, by setting to true all and only the epistemic literals occurring in the valid guess under consideration.

\begin{definition}[Guess-tailored table initialization]
Given program $\Pi$ and a valid guess $\Phi$, \emph{guess-tailored table initialization} will be performed in addition to normal initialization, in the following way: each epistemic literal occurring in $\Phi$ will be set to true, and all the other epistemic literals will be set to false. 
\end{definition}

\begin{definition}[G-RAS-XSB-Resolution]
The variant of RAS-XSB-resolution where, given program $\Pi$ and valid guess $\Phi$, guess-tailored table initialization is applied is called G-RAS-XSB-resolution (tailored to $\Phi$).
\end{definition}

\begin{theorem}
Given program $\Pi$ and a valid guess $\Phi$, G-RAS-XSB-resolution tailored to $\Phi$ is correct and complete w.r.t. the world view $\A$ obtained from $\Phi$.
\end{theorem}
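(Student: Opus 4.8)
The plan is to obtain the statement from the already-established correctness and completeness of plain RAS-XSB-resolution (recalled in Section~\ref{ras}), by showing that running G-RAS-XSB-resolution tailored to $\Phi$ on $\Pi$ is operationally indistinguishable from running ordinary RAS-XSB-resolution on the epistemic reduct $\Pi^{\Phi}$. Spelled out, the target equivalence is: for an atom $A$, the query $?\,A$ succeeds under G-RAS-XSB-resolution tailored to $\Phi$ (i.e.\ with guess-tailored table \tabp) if and only if $A \in M$ for some $M \in \A$, and likewise for sequences of contextual queries, where $\A$ is the world view obtained from $\Phi$. Correctness is the left-to-right direction, completeness the converse.

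First I would make the operational correspondence precise. By the assumptions of Section~\ref{observations}, epistemic literals occur only in rule bodies of $\Pi$, never in rule heads; hence the entries that guess-tailored initialization writes into \tabp\ for them --- true for every epistemic literal of $\Phi$, false for every other epistemic literal of $\Pi$ --- are static: the dynamic asserting of proved atoms, the availability bookkeeping of RAS-XSB-resolution, and the undoing and redoing of table modifications upon backtracking never affect them. So whenever the proof procedure meets an epistemic literal in a body it immediately succeeds (if that literal is in $\Phi$) or immediately fails (otherwise), which is precisely the effect of the syntactic transformation producing $\Pi^{\Phi}$: replace each epistemic literal of $\Phi$ by true and each epistemic literal of $\Pi$ outside $\Phi$ by a body conjunct that can never be satisfied (concretely, the fresh atom of step~1 of Theorem~\ref{rasguesscheck}, which heads no rule and is hence always false). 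Thus the derivations explored on $\Pi$ with a guess-tailored \tabp\ and on $\Pi^{\Phi}$ with an ordinary \tabp\ coincide, so $?\,A$ succeeds on one exactly when it succeeds on the other; the nested operators $\N\no A$, $\no\N A$, $\no\N\no A$ of $\Phi$ are handled identically, reading off their meaning as in the list of epistemic queries of Section~\ref{elpras}.

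The delicate point --- which I expect to be the main obstacle --- is the mismatch with the reduct originally used in \cite{ShenE16}, which replaces an epistemic literal $\N F \notin \Phi$ by $\no F$ rather than by an unconditionally false conjunct. I would close the gap using that $\Phi$ is a \emph{valid} guess: by definition of world view, every $\N F \notin \Phi$ is false w.r.t. $\A$, i.e.\ $F$ belongs to every answer set in $\A$, so $\no F$ holds in none of them; hence every rule whose body contains such a $\no F$ contributes to no RAS answer set, and deleting it --- equivalently, replacing the conjunct $\no F$ by a never-satisfiable one --- leaves the family of RAS answer sets unchanged. A short argument on the modified reduct and the $\hat{\Gamma}_{\Pi}$ operator of Section~\ref{ras} makes this robust in both directions, since $F$ comes out true in every RAS answer set whichever of the two forms of the reduct one starts from. (This is the same design choice, for the same reason, that underlies the RASCGK test of Theorem~\ref{rasguesscheck}.) So both forms of the reduct have the same RAS answer sets on the valid guess $\Phi$, and these are exactly the answer sets composing $\A$.

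Finally I would chain the biconditionals: by the operational correspondence, $?\,A$ succeeds under G-RAS-XSB-resolution tailored to $\Phi$ iff it succeeds under ordinary RAS-XSB-resolution on $\Pi^{\Phi}$; by the correctness and completeness of RAS-XSB-resolution w.r.t. resource-based answer set semantics (Section~\ref{ras}), the latter holds iff $A$ belongs to some RAS answer set of $\Pi^{\Phi}$; and by validity of $\Phi$ together with the applicability of the \cite{ShenE16} construction to RAS (noted at the start of Section~\ref{elpras}), the RAS answer sets of $\Pi^{\Phi}$ are precisely the answer sets composing $\A$. Composing these three equivalences yields both correctness and completeness for a single query, and the extension to sequences of contextual queries follows verbatim from the corresponding extension for RAS-XSB-resolution, because the guess-tailored entries of \tabp\ persist unchanged throughout a whole sequence.
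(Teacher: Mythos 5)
Your proposal is correct and follows essentially the same route as the paper's own (much terser) proof sketch: reduce the claim to the known correctness and completeness of RAS-XSB-resolution, observe that the world view obtained from $\Phi$ is exactly the set of answer sets of $\Pi$ with the epistemic literals of $\Phi$ set true and the rest false, and note that guess-tailored table initialization realizes precisely that assignment. Your additional care about the mismatch between the \cite{ShenE16} reduct (replacing $\N F \notin \Phi$ by $\no F$) and the fresh-atom variant of Theorem~\ref{rasguesscheck} addresses a point the paper's sketch leaves implicit, and is a welcome strengthening rather than a divergence.
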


\begin{proof}
(sketch) RAS-XSB-resolution is correct and complete w.r.t. the answer sets of given program $\Pi$. Thus, the RASCGK test is able to correctly establish whether a guess $\Phi$ is valid. Since a world view consists of the (set of) answer sets of given program where however epistemic literals belonging to valid guess $\Phi$ are deemed to be true and the other epistemic literals are deemed to be false, and since guess-tailored table initialization does exactly so, then G-RAS-XSB-resolution is correct w.r.t. the world view resulting from $\Pi$ given $\Phi$.
\end{proof}

For instance, to check whether $F$ is true in $\Pi$ under the general epistemic semantics of \cite{ShenE16}, i.e., to check whether $F$ belongs to every answer set of a the world view, it will suffice to issue the query ?\,$\K F$ to this world view.

Moreover, via a further extension one can be able to query the whole set of world views. 

\begin{definition} [Extended multi-view program]
	Let $\Pi$ be a program, and $\Phi_1,\ldots,\Phi_k$ be the valid guesses for $\Pi$. Let the \emph{extended multi-view program} ${\Pi}^{E}$ be an extended program obtained as the union of $k$ copies $\Pi^1,\ldots,\Pi^k$ of $\Pi$, where such copies have been however previously \emph{standardized apart}, i.e., atoms occurring therein have been suitably renamed. Let as assume for instance that each atom $A$ in $\Pi$ becomes, in $\Pi^1,\ldots,\Pi^k$, respectively $A^1,\ldots,A^k$. 
\end{definition}

\begin{definition}[Multi-guess table initialization]
	Given program $\Pi$, given its associated valid guesses $\Phi_1,\ldots,\Phi_k$, and given the \emph{extended multi-view program} ${\Pi}^{E}$, \emph{multi-guess table initialization} will be performed by: performing guess-tailored table initialization w.r.t. each $\Phi_1,\ldots,\Phi_k,$ where however these guesses have been previously \emph{standardized apart} correspondingly to what done for ${\Pi}^{E}$; i.e., each atom $A$ occurring in some literal of $\Phi_i$ ($i \leq k$) becomes $A^i$. 
\end{definition}

\begin{definition}[W-G-RAS-XSB-Resolution]
	The variant of G-RAS-XSB-resolution where, given program $\Pi$ and valid guesses $\Phi_1,\ldots,\Phi_k$, multi-guess table initialization is applied to ${\Pi}^{E}$, is called W-G-RAS-XSB-resolution (tailored to $\Phi$).
\end{definition}

Now, by W-G-RAS-XSB-resolution it is possible to issue the following queries on given program $\Pi$:

\begin{itemize}
\item Query ?\,$\Kw A$ asks whether $A$ is true in all world views of $\Pi$. This query is translated into query ?\,$\K A^1, \ldots, \K A^k$ to be executed on ${\Pi}^{E}$ (where comma stands for conjunction, i.e., for this query to succeed all conjuncts must succeed).
\item Query ?\,$\Mw A$ asks whether $A$ is possible in some world view of $\Pi$. This query is translated into query ?\,$\M A^1; \ldots; \M A^k$ to be executed on ${\Pi}^{E}$ (where semicolon stands for disjunction, i.e., for this query to succeed at least one disjunct must succeed). We might of course modify \Mw so as to check whether $A$ is possible in \emph{every} world view, by substituting `;' with `,'.
\item 
Query  ?\,\Nw $A$ asks whether $A$ is false w.r.t. some world view of $\Pi$, and therefore if $\no A$ is true in some of them. So, this corresponds to query $? \no A^1;\ldots;\no A^k$ to be executed on ${\Pi}^{E}$ (where semicolon again stands for disjunction).
\item 
Query ?\,\NOw $A$ asks whether $A$ is false in every answer set of every world view, translated under W-G-RAS-XSB-resolution as $?\, \NO A^1,\ldots,\NO A^k$.
\end{itemize}

This amounts to introducing new epistemic operators, that can be useful in practice as shown by the example presented in the next section.

\section{Case Study}
\label{casestudy}

In this section we will reconsider the example presented in the Introduction. Let us elaborate on this example, stating that a person is guilty if a witness recognizes that person (say, when perpetrating the crime):

$
\begin{array}{l}
\mathit{guilty(X)} \ar \mathit{suspect(X)}, \mathit{witness\_recognizes(X)}.\\
\mathit{witness\_recognizes(X)} \ar \mathit{suspect(X)}, \mathit{witness1\_recognizes(X)}.\\
\mathit{witness\_recognizes(X)} \ar \mathit{suspect(X)}, \mathit{witness2\_recognizes(X)}.
\end{array}
$

\smallskip
Assume to have the fact $\mathit{suspect(john)}$.

Now, if we have also a fact, e.g., $\mathit{witness1\_recognizes(john)}$, then this will be true in every answer set and thus $\mathit{guilty(john)}$ will be provable. Assume instead that witness number one is not sure, which can be represented as:

$
\begin{array}{l}
\mathit{witness1\_recognizes(john)} \ar \no \mathit{witness1\_not\_recognizes(john)} \\
\mathit{witness1\_not\_recognizes(john)} \ar \no \mathit{witness1\_recognizes(john)}.
\end{array}
$

\smallskip
In this case, if we have no information about evidence provided by the other witness, then $\mathit{innocent(john)}$ will be provable, as there is an answer set where both $\mathit{witness\_recognizes(john)}$ and thus $\mathit{guilty(john)}$ are false. 

Assume now that we have $\mathit{witness1\_recognizes(john)}$.
Assume that instead we have been informed that $\mathit{witness2\_recognizes(john)}$ is false. So, we can state that the two disagree via a fact $\mathit{disagree\_w1\_w2(john)}$. 
If we do not want to give priority to any of the witnesses, in absence of other information we may state:

$
\begin{array}{l}
\mathit{guilty(X)} \ar \mathit{suspect(X)}, \mathit{witness\_recognizes(X)}.\\
\mathit{witness\_recognizes(X)} \ar \mathit{witness1\_recognizes(X)},\mathit{reliable(witness1,X)}.\\
\mathit{witness\_recognizes(X)} \ar \mathit{witness2\_recognizes(X)},\mathit{reliable(witness2,X)}.
\end{array}
$

\smallskip\noindent
Whenever two witnesses propose contrary evidence on someone/something, then they cannot be both reliable (concerning the situation at hand). This can be expressed as follows:

$
\begin{array}{l}
\mathit{reliable(witness1,X)} \ar \mathit{disagree\_w1\_w2(X)},\NO \mathit{reliable(witness2,X)}.\\
\mathit{reliable(witness2,X)} \ar \mathit{disagree\_w1\_w2(X)}, \NO \mathit{reliable(witness1,X)}.
\end{array}
$

\smallskip
We use the new proposed operator \NO because \N is too weak: it is not sufficient, to deem a witness to be reliable, to state that the other one \emph{may be} unreliable. In such case as this one should seek certainty, i.e., \NO checks falsity in \emph{every} answer set of a world view. Notice that the above formulation results in a cycle on the operator \NO\hspace{-0.1cm}, namely, in abstract terms
\(
\{a \ar \NO b.\ b \ar \NO a.\}\)
which gives rise to alternative world views for exactly the same reasons stated in Theorem\rif{guesses} for \N\hspace{-0.1cm}.
So, at the present stage we have two world views:

\begin{description}
\item $
\begin{array}{l}
\{\{\mathit{suspect(john), witness1\_recognizes(john)},\mathit{disagree\_w1\_w2(john)},\\
\ \ \ \, \mathit{reliable(witness1,john)},\mathit{witness\_recognizes(john)},\mathit{guilty(john)}\}\}
\end{array}$
\item and
\item $
\begin{array}{l}
\{\{\mathit{suspect(john),witness1\_recognizes(john)},\mathit{disagree\_w1\_w2(john)},\\
\ \ \ \,\mathit{reliable(witness2,john)},\mathit{innocent(john)}\}\}
\end{array}$
\end{description}

We can see that in the first world view $\mathit{john}$ is guilty, while in the second one he is not. Here we can appreciate the usefulness of the queries over the whole set of world views:

\begin{description}
	\item ?\,$\Kw \mathit{guilty(john)}$ asks whether $\mathit{john}$ is guilty in every world view, and the answer is negative;
	\item ?\,$\Mw \mathit{guilty(john)}$ asks whether $\mathit{john}$ is possibly guilty in some world view, and the answer is positive.
\end{description}

We can rephrase the original example as follows:

$
\begin{array}{l}
\mathit{innocent(X)} \ar \mathit{suspect(X)}, \mathit{provably\_innocent(X)}.\\
\mathit{innocent(X)} \ar \mathit{suspect(X)}, \mathit{presumed\_innocent(X)}.\\
\mathit{provably\_innocent(X)} \ar \NOw \mathit{guilty(X)}.\\
\mathit{presumed\_innocent(X)} \ar \Nw \mathit{guilty(X)}.
\end{array}
$

\smallskip
In this formulation, $\mathit{john}$ is $\mathit{presumed\_innocent}$ because there exists indeed a world view where he is not guilty. However, he is not provably innocent because there will exist world views where he is guilty. A court would deem $\mathit{john}$ innocent in both cases. However, prior to court, the law enforcement might decide for supplementary investigations (for instance, concerning reliability of witnesses) in order to ascertain a conclusion `beyond any reasonable doubt'.

\section{Conclusions}

We have proposed alternative characterizations of ELPs (Epistemic Logic Programs), and we have shown the usefulness of our approach in formalizing significant examples. 
Future directions include the extension of the approach so as to make the syntax of the programs that we consider more general. An implementation is also in order, to replace the embryonic prototype on which we have run the proposed examples. Then, experiments will be performed on suitable benchmarks in order to assess the effective practical value of the approach.


\begin{thebibliography}{}
	
	\bibitem[\protect\citeauthoryear{Bichler, Morak, and Woltran}{Bichler
		et~al\mbox{.}}{2018}]{Woltran2018}
	{\sc Bichler, M.}, {\sc Morak, M.}, {\sc and} {\sc Woltran, S.} 2018.
	\newblock Single-shot epistemic logic program solving.
	\newblock In {\em Proc. of the Twenty-Seventh Intl. Joint Conf. on Artificial
		Intelligence, {IJCAI} 2018}, {J.~Lang}, Ed. ijcai.org, 1714--1720.
	
	\bibitem[\protect\citeauthoryear{Bordini, Braubach, Dastani,
		Fallah{-}Seghrouchni, G{\'{o}}mez{-}Sanz, Leite, O'Hare, Pokahr, and
		Ricci}{Bordini et~al\mbox{.}}{2006}]{BordiniBDFGLOPR06}
	{\sc Bordini, R.~H.}, {\sc Braubach, L.}, {\sc Dastani, M.}, {\sc
		Fallah{-}Seghrouchni, A.~E.}, {\sc G{\'{o}}mez{-}Sanz, J.~J.}, {\sc Leite,
		J.}, {\sc O'Hare, G. M.~P.}, {\sc Pokahr, A.}, {\sc and} {\sc Ricci, A.}
	2006.
	\newblock A survey of programming languages and platforms for multi-agent
	systems.
	\newblock {\em Informatica (Slovenia)\/}~{\em 30,\/}~1, 33--44.
	
	\bibitem[\protect\citeauthoryear{Bowen and Kowalski}{Bowen and
		Kowalski}{1983}]{BowKow1983}
	{\sc Bowen, K.} {\sc and} {\sc Kowalski, R.~A.} 1983.
	\newblock Amalgamating language and metalanguage in logic programming.
	\newblock In {\em Logic Programming}, {K.~Clark} {and} {S.~A. Tarnlund}, Eds.
	Academic Press.
	
	\bibitem[\protect\citeauthoryear{Brass, Dix, Freitag, and Zukowski}{Brass
		et~al\mbox{.}}{2001}]{BrassDFZ01}
	{\sc Brass, S.}, {\sc Dix, J.}, {\sc Freitag, B.}, {\sc and} {\sc Zukowski, U.}
	2001.
	\newblock Transformation-based bottom-up computation of the well-founded model.
	\newblock {\em {Theory and Practice of Logic Programming}\/}~{\em 1,\/}~5,
	497--538.
	
	\bibitem[\protect\citeauthoryear{Brewka, Eiter, and (eds.)}{Brewka
		et~al\mbox{.}}{2016}]{ASPJournal2016}
	{\sc Brewka, G.}, {\sc Eiter, T.}, {\sc and}  {\sc Truszczy{\'{n}}ski, M.} {\sc (eds.)} 2016.
	\newblock Answer set programming: Special issue.
	\newblock {\em {AI} Magazine\/}~{\em 37,\/}~3: 5-6.
	
	\bibitem[\protect\citeauthoryear{Cabalar, Fandinno, and del Cerro}{Cabalar
		et~al\mbox{.}}{2019a}]{Cabalar19a}
	{\sc Cabalar, P.}, {\sc Fandinno, J.}, {\sc and} {\sc del Cerro, L.~F.} 2019a.
	\newblock Founded world views with autoepistemic equilibrium logic.
	\newblock In {\em Logic Programming and Nonmonotonic Reasoning - 15th Intl.
		Conf., {LPNMR} 2019, Proceedings}, {M.~Balduccini}, {Y.~Lierler}, {and}
	{S.~Woltran}, Eds. Lecture Notes in Computer Science, vol. 11481. 134--147.
	
	\bibitem[\protect\citeauthoryear{Cabalar, Fandinno, and del Cerro}{Cabalar
		et~al\mbox{.}}{2019b}]{Cabalar19b}
	{\sc Cabalar, P.}, {\sc Fandinno, J.}, {\sc and} {\sc del Cerro, L.~F.} 2019b.
	\newblock Splitting epistemic logic programs.
	\newblock In {\em Logic Programming and Nonmonotonic Reasoning - 15th Intl.
		Conf., {LPNMR} 2019, Proceedings}, {M.~Balduccini}, {Y.~Lierler}, {and}
	{S.~Woltran}, Eds. Lecture Notes in Computer Science, vol. 11481. 120--133.
	
	\bibitem[\protect\citeauthoryear{Chen and Warren}{Chen and
		Warren}{1993}]{ChenW93}
	{\sc Chen, W.} {\sc and} {\sc Warren, D.~S.} 1993.
	\newblock A goal-oriented approach to computing the well-founded semantics.
	\newblock {\em Journal of Logic Programming\/}~{\em 17,\/}~2/3{\&}4, 279--300.
	
	\bibitem[\protect\citeauthoryear{Cholewinski and Truszczynski}{Cholewinski and
		Truszczynski}{1999}]{CholewinskiT99}
	{\sc Cholewinski, P.} {\sc and} {\sc Truszczynski, M.} 1999.
	\newblock Extremal problems in logic programming and stable model computation.
	\newblock {\em Journal of Logic Programming\/}~{\em 38,\/}~2, 219--242.
	
	\bibitem[\protect\citeauthoryear{Costantini}{Costantini}{2006}]{Cos06}
	{\sc Costantini, S.} 2006.
	\newblock On the existence of stable models of non-stratified logic programs.
	\newblock {\em Theory and Practice of Logic Programming\/}~{\em 6,\/}~1-2.
	
	\bibitem[\protect\citeauthoryear{Costantini}{Costantini}{2011}]{Cos2012ASP-modules}
	{\sc Costantini, S.} 2011.
	\newblock Answer set modules for logical agents.
	\newblock In {\em Datalog Reloaded - First Intl. Workshop, Datalog 2010,
		Revised Selected Papers}, {O.~de~Moor}, {G.~Gottlob}, {T.~Furche}, {and}
	{A.~J. Sellers}, Eds. Number 6702 in Lecture Notes in Computer Science.
	Springer, 37--58.
	
	\bibitem[\protect\citeauthoryear{Costantini and Formisano}{Costantini and
		Formisano}{2015}]{CostantiniF15}
	{\sc Costantini, S.} {\sc and} {\sc Formisano, A.} 2015.
	\newblock Negation as a resource: a novel view on answer set semantics.
	\newblock {\em Fundamenta Informaticae\/}~{\em 140,\/}~3-4, 279--305.
	
	\bibitem[\protect\citeauthoryear{Costantini and Formisano}{Costantini and
		Formisano}{2016}]{CostantiniF16}
	{\sc Costantini, S.} {\sc and} {\sc Formisano, A.} 2016.
	\newblock Query answering in resource-based answer set semantics.
	\newblock {\em {Theory and Practice of Logic Programming}\/}~{\em 16,\/}~5-6,
	619--635.
	
	\bibitem[\protect\citeauthoryear{Costantini and Provetti}{Costantini and
		Provetti}{2005}]{CostantiniP05}
	{\sc Costantini, S.} {\sc and} {\sc Provetti, A.} 2005.
	\newblock Normal forms for answer sets programming.
	\newblock {\em {Theory and Practice of Logic Programming}\/}~{\em 5,\/}~6,
	747--760.
	
	\bibitem[\protect\citeauthoryear{del Cerro, Herzig, and Su}{del Cerro
		et~al\mbox{.}}{2015}]{CerroHS15}
	{\sc del Cerro, L.~F.}, {\sc Herzig, A.}, {\sc and} {\sc Su, E.~I.} 2015.
	\newblock Epistemic equilibrium logic.
	\newblock In {\em Proceedings of the Twenty-Fourth Intl. Joint Conf. on
		Artificial Intelligence, {IJCAI} 2015}, {Q.~Yang} {and} {M.~Wooldridge}, Eds.
	{AAAI} Press, 2964--270.
	
	\bibitem[\protect\citeauthoryear{Dix}{Dix}{1995}]{Dix95AeB}
	{\sc Dix, J.} 1995.
	\newblock A classification theory of semantics of normal logic programs~{I-II}.
	\newblock {\em Fundamenta Informaticae\/}~{\em 22,\/}~3, 227--255 and 257--288.
	
	\bibitem[\protect\citeauthoryear{Faber, Pfeifer, and Leone}{Faber
		et~al\mbox{.}}{2011}]{FaberPL11}
	{\sc Faber, W.}, {\sc Pfeifer, G.}, {\sc and} {\sc Leone, N.} 2011.
	\newblock Semantics and complexity of recursive aggregates in answer set
	programming.
	\newblock {\em Artificial Intelligence\/}~{\em 175,\/}~1, 278--298.
	
	\bibitem[\protect\citeauthoryear{Gelfond}{Gelfond}{2011}]{Gelfond11}
	{\sc Gelfond, M.} 2011.
	\newblock New semantics for epistemic specifications.
	\newblock In {\em Logic Programming and Nonmonotonic Reasoning - 11th Intl.
		Conf., {LPNMR} 2011, Proceedings}, {J.~P. Delgrande} {and} {W.~Faber}, Eds.
	Lecture Notes in Computer Science, vol. 6645. Springer, 260--265.
	
	\bibitem[\protect\citeauthoryear{Gelfond and Lifschitz}{Gelfond and
		Lifschitz}{1988}]{GelLif88}
	{\sc Gelfond, M.} {\sc and} {\sc Lifschitz, V.} 1988.
	\newblock The stable model semantics for logic programming.
	\newblock In {\em Proceedings of the 5th Intl.\ Conf.\ and Symposium on Logic
		Programming}, {R.~Kowalski} {and} {K.~Bowen}, Eds. MIT Press, 1070--1080.
	
	\bibitem[\protect\citeauthoryear{Gelfond and Lifschitz}{Gelfond and
		Lifschitz}{1991}]{GelLif91}
	{\sc Gelfond, M.} {\sc and} {\sc Lifschitz, V.} 1991.
	\newblock Classical negation in logic programs and disjunctive databases.
	\newblock {\em New Generation Computing\/}~{\em 9}, 365--385.
	
	\bibitem[\protect\citeauthoryear{Gelfond and Przymusinska}{Gelfond and
		Przymusinska}{1991}]{Gelfond91}
	{\sc Gelfond, M.} {\sc and} {\sc Przymusinska, H.} 1991.
	\newblock Definitions in epistemic specifications.
	\newblock In {\em Logic Programming and Non-monotonic Reasoning, Proceedings of
		the First Intl. Workshop}, {A.~Nerode}, {V.~W. Marek}, {and} {V.~S.
		Subrahmanian}, Eds. The {MIT} Press, 245--259.
	
	\bibitem[\protect\citeauthoryear{Kahl and Leclerc}{Kahl and
		Leclerc}{2018}]{Kahl18}
	{\sc Kahl, P.~T.} {\sc and} {\sc Leclerc, A.~P.} 2018.
	\newblock Epistemic logic programs with world view constraints.
	\newblock In {\em Technical Communications of the 34th Intl. Conf. on Logic
		Programming, {ICLP} 2018}, {A.~D. Pal{\`{u}}}, {P.~Tarau}, {N.~Saeedloei},
	{and} {P.~Fodor}, Eds. {OASI}cs, vol.~64. Schloss Dagstuhl, 1:1--1:17.
	
	\bibitem[\protect\citeauthoryear{Lloyd}{Lloyd}{1987}]{lloyd87}
	{\sc Lloyd, J.~W.} 1987.
	\newblock {\em Foundations of Logic Programming}.
	\newblock Springer-Verlag.
	
	\bibitem[\protect\citeauthoryear{Marek and Truszczy{\'{n}}ski}{Marek and
		Truszczy{\'{n}}ski}{1993}]{MarekT93}
	{\sc Marek, V.~W.} {\sc and} {\sc Truszczy{\'{n}}ski, M.} 1993.
	\newblock Reflective autoepistemic logic and logic programming.
	\newblock In {\em LPNMR, Logic Programming and Non-monotonic Reasoning, Proc.
		of the Second Intl. Workshop}, {A.~Nerode} {and} {L.~M. Pereira}, Eds. The
	MIT Press, 115--131.
	
	\bibitem[\protect\citeauthoryear{Shen and Eiter}{Shen and
		Eiter}{2016}]{ShenE16}
	{\sc Shen, Y.} {\sc and} {\sc Eiter, T.} 2016.
	\newblock Evaluating epistemic negation in answer set programming.
	\newblock {\em Artificial Intelligence\/}~{\em 237}, 115--135.
	
	\bibitem[\protect\citeauthoryear{Son, Le, Kahl, and Leclerc}{Son
		et~al\mbox{.}}{2017}]{SonLKL17}
	{\sc Son, T.~C.}, {\sc Le, T.}, {\sc Kahl, P.~T.}, {\sc and} {\sc Leclerc,
		A.~P.} 2017.
	\newblock On computing world views of epistemic logic programs.
	\newblock In {\em Proc. of the Twenty-Sixth Intl. Joint Conf. on Artificial
		Intelligence, {IJCAI} 2017}, {C.~Sierra}, Ed. ijcai.org, 1269--1275.
	
	\bibitem[\protect\citeauthoryear{Su}{Su}{2017}]{Su17}
	{\sc Su, E.~I.} 2017.
	\newblock A monotonic view on reflexive autoepistemic reasoning.
	\newblock In {\em Logic Programming and Nonmonotonic Reasoning - 14th Intl.
		Conf., {LPNMR} 2017, Proceedings}, {M.~Balduccini} {and} {T.~Janhunen}, Eds.
	Lecture Notes in Computer Science, vol. 10377. Springer, 85--100.
	
	\bibitem[\protect\citeauthoryear{Su}{Su}{2019}]{Su19}
	{\sc Su, E.~I.} 2019.
	\newblock Epistemic answer set programming.
	\newblock In {\em Logics in Artificial Intelligence - 16th European Conf.,
		{JELIA} 2019, Proceedings}, {F.~Calimeri}, {N.~Leone}, {and} {M.~Manna}, Eds.
	Lecture Notes in Computer Science, vol. 11468. Springer, 608--626.
	
	\bibitem[\protect\citeauthoryear{Swift and Warren}{Swift and
		Warren}{2012}]{SwiftW12}
	{\sc Swift, T.} {\sc and} {\sc Warren, D.~S.} 2012.
	\newblock {XSB}: Extending prolog with tabled logic programming.
	\newblock {\em Theory and Practice of Logic Programming\/}~{\em 12,\/}~1-2,
	157--187.
	
	\bibitem[\protect\citeauthoryear{Truszczynski}{Truszczynski}{}]{Truszczynski11}
	{\sc Truszczynski, M.}
	\newblock Revisiting epistemic specifications.
	\newblock In {\em Logic Programming, Knowledge Representation, and Nonmonotonic
		Reasoning}, {M.~Balduccini} {and} {T.~C. Son}, Eds.
	
	\bibitem[\protect\citeauthoryear{{Van~Gelder}, Ross, and Schlipf}{{Van~Gelder}
		et~al\mbox{.}}{1991}]{VGelderRS91}
	{\sc {Van~Gelder}, A.}, {\sc Ross, K.~A.}, {\sc and} {\sc Schlipf, J.~S.} 1991.
	\newblock The well-founded semantics for general logic programs.
	\newblock {\em Journal of the ACM\/}~{\em 38,\/}~3, 620--650.
	
\end{thebibliography}
\end{document}